\documentclass{article} 
\usepackage{iclr2026_conference,times}

\usepackage{booktabs}
\usepackage{multirow}
\usepackage{graphicx}

\usepackage{amsmath,amsfonts,bm}

\def\eqref#1{equation~\ref{#1}}

\def\1{\bm{1}}

\DeclareMathAlphabet{\mathsfit}{\encodingdefault}{\sfdefault}{m}{sl}
\SetMathAlphabet{\mathsfit}{bold}{\encodingdefault}{\sfdefault}{bx}{n}

\usepackage[
  colorlinks=true,
  linkcolor=black,
  citecolor=black,
  urlcolor=black
]{hyperref}
\usepackage{url}
\usepackage{enumitem}

\usepackage{amsmath}
\usepackage{amssymb}
\usepackage{mathtools}
\usepackage{amsthm}
\usepackage{subcaption}         

\usepackage{algorithm}
\usepackage{algpseudocode}
\usepackage{float}

\theoremstyle{plain}
\newtheorem{theorem}{Theorem}[section]

\newtheorem{lemma}[theorem]{Lemma}
\newtheorem{corollary}[theorem]{Corollary}

\theoremstyle{definition}

\theoremstyle{remark}

\usepackage{color}
\usepackage{url}
\usepackage{tikz}
\usepackage{dsfont}

\title{Learning with a Single Rollout via Monte Carlo Pass@k Critic}

\author{%
    \begin{minipage}{\textwidth}
    \vspace{0.4em}
    \raggedright
    \small
    \setlength{\tabcolsep}{0pt}
    
    \begin{tabular}{@{}l@{}}
        Fengdi Che\textsuperscript{1}\quad
        Yang Liu\textsuperscript{2}\quad
        Lei Yu\textsuperscript{3}\quad
        Meng Cao\textsuperscript{4}\quad
        Tong Che\textsuperscript{5} \vspace{0.2em}\\
        A.\ Rupam Mahmood\textsuperscript{1,6,7}\quad
        Dale Schuurmans\textsuperscript{1,6,7}\vspace{0.4em}\\
        \footnotesize
            \begin{tabular}{@{}l@{\hspace{1.6em}}l@{\hspace{1.6em}}l@{\hspace{1.6em}}l@{}}
            \textsuperscript{1}University of Alberta &
            \textsuperscript{2}BIGAI &
            \textsuperscript{3}University of Toronto \vspace{0.2em} \\
            \textsuperscript{4}McGill University, Mila &
            \textsuperscript{5}Nvidia Research &
            \textsuperscript{6}Amii &
            \textsuperscript{7}CIFAR AI Chair
        \end{tabular}
        \scriptsize
    \end{tabular}
    \end{minipage}
}

\iclrfinalcopy 
\begin{document}

\maketitle

\begin{abstract}
Estimating token-level advantages in reinforcement learning (RL) for language models remains challenging because scaling up episodic experience collection is expensive. The difficulty intensifies for baseline advantage estimation methods, where repeated sampling causes trajectories to diverge into substantially different reasoning prefixes. In this context, RL algorithms such as GRPO prove limited: an outcome reward is too sparse to be attributed to specific actions like intermediate steps, and comparisons across sampled traces are non-trivial because they are heterogeneous. To mitigate both the computational cost of repeated sampling and the difficulty of credit assignment, we study single-rollout proximal policy optimization (SR-PPO) featuring token-level credit assignment in RL for language models. Instead of estimating advantages by normalizing episodic returns within the candidate group, we train a calibrated token-level credit critic using Monte Carlo outcomes from one rollout per prompt. Specifically, we use the critic to predict the Pass@k success probability at the prompt prefix, which is derived from a Pass@1 attempt. This choice yields a more selective learning signal than Pass@1: it discounts easily solved prefixes while prioritizing hard ones whose success probability remains marginal. We show that as $k$ increases, Pass@k converges to a reachability indicator, reflecting whether a prefix can lead to at least one successful continuation. In an explicit state graph, the limit ($k \rightarrow \infty$) can be computed in $O(|V|+|E|)$ time, offering a promising surrogate for direct credit assignment without the need to sample contrastive traces. As an initial validation, SR-PPO exhibits stable learning dynamics, along with consistent gains in Pass@128 success rates on mathematical reasoning benchmarks such as HMMT26 and AIME24.
\end{abstract}

\section{Introduction}

Language models (LMs) have made rapid progress in complex reasoning, especially when post-trained with reinforcement learning (RL) from verifiable rewards. In domains such as competition mathematics, theorem-style reasoning, and code generation, the predicted answer can be easily verified, making outcome reward-based RL a natural training paradigm \citep{cobbe2021training, hendrycks2021math, lewkowycz2022minerva, shao2024deepseekmath, guo2025deepseekr1}. However, a final correctness signal provides only coarse supervision. A reasoning trace usually contains intermediate steps \citep{wei2022chain} consisting of misleading detours and recoveries, thereby giving rise to the ``prefix trap'' problem \citep{wu2025native}. However, outcome-based rewards only indicate the correctness of the proposed solution. This poses a credit-assignment challenge: the training algorithm must discern which tokens, segments, and reasoning steps should be reinforced.

These problems intensify within the landscape of contemporary agentic systems. Modern reasoning agents frequently involve spawning sub-agents \citep{wang2025mixtureofagents}, executing tool calls \citep{wu2026mcpmark}, performing memory retrieval \citep{jia2025the}, and handling realistic workloads \citep{yang2026onemillion}. In such systems, generation is expensive, and independently sampled rollouts for the same prompt may follow substantially different paths of rationale. One rollout may call a verifier early, another may attempt symbolic manipulation, and another may explore a different decomposition of the problem. As a result, group-based policy optimization methods such as GRPO can become both computationally expensive and statistically noisy \citep{shao2024deepseekmath}. A single final outcome reward is too sparse to identify the contribution of each local decision, while cross-rollout comparisons can be unreliable when the rollouts differ in their internal procedures.

\begin{figure}
    \centering
    \includegraphics[width=\linewidth]{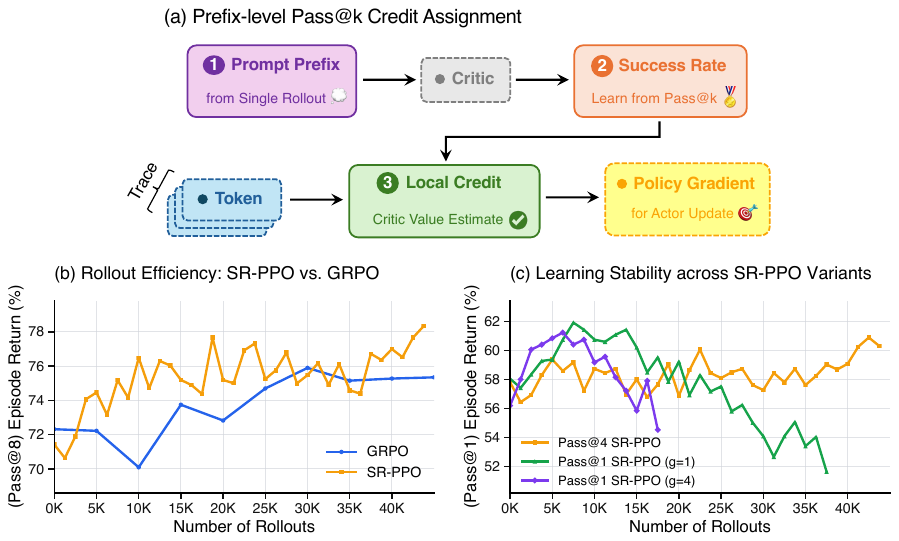}
    \caption{\textbf{Method overview and rollout-normalized performance for SR-PPO.} \textbf{(a)}~Prefix-level Pass@$k$ credit assignment: a single-rollout prefix is evaluated by a learned Pass@$k$ credit critic, where local value shifts provide token-level policy-gradient credit. \textbf{(b)}~Validation Pass@8 success rates of SR-PPO and GRPO versus total training rollouts. To isolate sample efficiency from gradient update frequencies, the x-axis normalizes the rollout budget: each GRPO step consumes $128 \times 8 = 1024$ rollouts, while each SR-PPO step consumes only $256$. \textbf{(c)}~Validation Pass@1 stability across SR-PPO variants under the identical rollout budget, with $g$ denoting gradient accumulation steps.}
    \label{fig:methodology}
    \vspace{-0.5em}
\end{figure}

These challenges motivate single-rollout (SR) learning. In this setting, the learner receives only one sampled trajectory for each prompt and must still extract a useful token-level training signal. This regime is important when additional rollouts are expensive, heterogeneous, or difficult to compare. It is also directly relevant to long-horizon agent training, where the cost of collecting multiple independent trajectories may scale with tool latency, environment interaction, or multi-agent coordination. Instead of relying on group-level normalization across several completions, single-rollout learning must assign credit within the realized trajectory itself.

This report studies single-rollout proximal policy optimization for token-level credit assignment in mathematical reasoning. Standard PPO is usually paired with generalized advantage estimation (GAE), which computes advantages from temporal-difference (TD) errors using a learned value function and bootstrapping from future value predictions \citep{schulman2015gae, schulman2017ppo}. This design is effective when the value function provides reliable dense estimates along a trajectory, but it can be less suitable when the only supervision is a delayed binary outcome reward. In long-horizon tasks, value bootstrapping may accumulate errors along many intermediate steps. This motivates replacing conventional TD-bootstrapped advantage estimation with a Monte Carlo labeled credit model whose predictions are directly tied to eventual success, leading to our \emph{SR-PPO}.

The main idea is to train a calibrated prefix-level credit model from Monte Carlo outcome labels as shown in Figure \ref{fig:methodology} Panel (a). For each partial solution, the credit model predicts how likely the current reasoning state is to eventually lead to a correct final answer in $k$ passes. This prediction is then used as a value estimate for policy optimization. By comparing the predicted value before and after each local continuation, the method assigns positive credit to tokens that improve the estimated chance of success and negative credit to tokens that reduce it. A terminal correction term keeps the dense credit signal aligned with the observed final outcome.

A key design choice is to use a Pass@k value family rather than only the ordinary Pass@1 success probability. Pass@1 measures the probability that one continuation succeeds, while Pass@k measures whether at least one of several possible continuations would succeed. Using Pass@4 as the main operating point produces a sparser and more selective credit signal. Easy prefixes that are already likely to be solved are downweighted, while harder prefixes that remain solvable receive greater emphasis. This helps reduce unnecessary updates on already-solvable states and shifts learning toward parts of the trajectory where improvement is still possible.

The Pass@k perspective also gives an interpretable connection to reachability. As the number of possible continuations becomes very large, the Pass@k value approaches a binary question: whether the current prefix can lead to any successful continuation at all. This reachability view separates two notions that are often conflated in outcome-based reinforcement learning: whether a state is solvable in principle, and how likely the current policy is to solve it. On an explicit finite state graph, this reachability signal can be computed by a graph traversal in time linear in the size of the graph, suggesting a tractable surrogate for direct credit assignment when evaluating each prefixes with chains of thought and extra sampling to satisfy required computation and sample complexity requirement is infeasible.

The main contributions of this report are as follows:
\begin{enumerate}[leftmargin=1.25em]
    \item \textbf{Token-Level Credit Assignment in Single-Rollout Regime:} We formulate token-level credit assignment for mathematical reasoning within the highly constrained single-rollout recipe, where only a single sampled trajectory is available for each prompt.
    \item \textbf{The Pass@$k$ Value Family and Reachability Interpretation:} We train a calibrated prefix-level success predictor from single-rollout Monte Carlo outcomes to model a novel Pass@$k$ value family. We show that Pass@$k$ (with Pass@4 as a practical operating point) emphasizes hard but reachable prefixes while downweighting already-solvable states, connecting coarse success signals to a formal graph reachability interpretation.
    \item \textbf{The Pass@$k$ SR-PPO Algorithm:} We introduce the design of Pass@$k$ SR-PPO, utilizing a difference-in-value advantage accompanied by a critical terminal correction term to assign precise local credit to tokens that actively increase the predicted success probability.
    \item \textbf{Empirical Validation:} We evaluate our method on competition-level mathematical reasoning benchmarks. Our empirical results demonstrate stable optimization and consistent improvements in Pass@$k$ metrics, while uncovering an insightful trade-off against Pass@1.
\end{enumerate}

\section{Related Work}

\paragraph{RL for Reasoning.}
PPO remains the standard starting point for policy-gradient fine-tuning of LMs because its clipped surrogate objective constrains policy updates while allowing multiple optimization epochs over sampled trajectories \citep{schulman2017ppo}. In language-model post-training, this basic recipe underlies many RLHF and reward-model optimization pipelines \citep{stiennon2020summarize, ouyang2022training}. For mathematical reasoning and code, the recent RL-with-verifiable-rewards line replaces subjective preference rewards with automatically checkable correctness signals. DeepSeekMath \citep{shao2024deepseekmath} introduced Group Relative Policy Optimization (GRPO), a critic-free PPO variant that normalizes rewards within a group of sampled completions and reduces the memory overhead of actor-critic PPO \citep{shao2024deepseekmath}. DeepSeek-R1 then demonstrated that large-scale rule-based RL can elicit strong reasoning behavior, while DeepSeek-V3 reports a full SFT-and-RL post-training stage on top of a large MoE base model \citep{guo2025deepseekr1,deepseekai2024deepseekv3}. ProRL studies the complementary question of whether prolonged RL expands the set of reasoning strategies reachable by the base model rather than merely amplifying already likely completions \citep{liu2025prorl}.

These systems motivate our setting but also expose its bottleneck. Group-based methods such as GRPO obtain a useful baseline by comparing multiple completions for the same prompt. In long-horizon agentic systems, however, each completion can involve different tool calls, modules, environment states, and execution paths. The resulting group can be expensive to sample and noisy to compare. Our work therefore studies the lower-budget regime in which only one rollout is available per prompt and credit must be assigned within that rollout.

\paragraph{Process Rewards.}
A central theme in mathematical-reasoning post-training is the distinction between outcome supervision and process supervision. Outcome reward models and verifiers score the final answer, whereas process reward models (PRMs) score intermediate reasoning steps \citep{cobbe2021training, lightman2023letsverify}. Math-Shepherd reduces the cost of dense process labels by constructing step-level supervision automatically and using the resulting PRM for both verification and step-by-step PPO \citep{wang2023mathshepherd}. PRIME further reduces the supervision burden by deriving implicit process rewards from policy rollouts and outcome labels, avoiding a separate manually labeled PRM training phase \citep{cui2025prime}. Rewarding Progress / PAV argues that effective process rewards should measure progress: the change in the likelihood of eventually producing a correct response before and after a step \citep{setlur2024rewardingprogress}. AgentPRM adapts this idea to agent tasks, where actions are not simply correct or incorrect but can be evaluated by promise and progress toward the final goal \citep{xi2025agentprm}. Likelihood-based rewards provide another route to dense or less task-specific supervision by rewarding the log-probability of a reference answer or continuation, which can be applied even when a hand-designed verifier is unavailable \citep{kwiatkowski2026likelihood}.

Note that our prefix-level success predictor using only the final binary outcome label and then use differences in predicted value to assign token-level credit. Thus, the credit signal resembles a learned progress reward, while the supervision remains outcome-only. 

\paragraph{Dense Credit Assignment.}
A related direction uses multiple sampled trajectories to recover structure among prefixes. Exploiting Tree Structure for Credit Assignment converts a group of completions into a prefix tree and estimates nonparametric prefix values from descendant outcomes; its TEMPO algorithm adds branch-gated temporal-difference corrections to GRPO, giving token-level credit mainly at branching points \citep{tran2025treecredit}. TreePO similarly treats generation as a tree-structured search process, using dynamic tree sampling, fixed-length segment decoding, and tree-based segment-level advantage estimation to reduce rollout cost while preserving exploration \citep{li2025treepo}. For long-horizon agents, GiGPO constructs a group-in-group hierarchy: macro advantages compare full trajectories, while micro advantages compare actions from repeated anchor states \citep{feng2025gigpo}. HGPO further studies context inconsistency in stepwise grouping and aggregates advantages over hierarchical groups to trade off bias and variance \citep{he2026hierarchyofgroups}. 

These methods are important baselines because they directly attack credit assignment with branching or grouping structure. Their limitation for our target regime is that the structure itself requires multiple trajectories and repeated states, which is costly in long-horizon learning.

Another trends re-introduce value functions. VIMPO derives a policy-implied value function from the optimality conditions of KL-regularized RL, yielding a critic-free value loss and a PPO-style actor advantage that separate reward incorporation from policy improvement \citep{kang2026vimpo}. Recent industry system reports, such as Z.AI's GLM-5.2 blog, emphasize the shift toward long-horizon agentic evaluation and PPO training with a single rollout \citep{zai2026glm52}.

\paragraph{Agentic RL, Tool Use, and Software-Engineering Agents.}

Several algorithmic variants further emphasize credit assignment in multi-turn agents. ARPO augments GRPO with experience replay for GUI agents to reuse successful trajectories under high rollout cost \citep{lu2025arpo}. SPA-RL learns a progress estimator that redistributes final task reward into stepwise contributions \citep{wang2025sparl}. IGPO defines turn-level intrinsic rewards as the marginal increase in the model's probability of producing the ground-truth answer, directly targeting information acquisition in multi-turn search \citep{wang2025igpo}. Evolving-RL jointly optimizes experience extraction and utilization so that agents can distill reusable lessons from past interactions \citep{fan2026evolvingrl}. Together, these works show that agentic training is not only a reward-design problem but also a trajectory-structure and credit-assignment problem.

\section{Methodology}

This section introduces the proposed single-rollout training framework. We first define the problem setup and the Monte Carlo prefix-value objective. We then describe the outcome-calibrated credit model, the transformation from Pass@1 to Pass@k values, the resulting token-level advantage estimator, and the PPO objective used for policy optimization. Finally, we discuss the graph-theoretic reachability interpretation of the Pass@k limit.

\subsection{Problem Setup}

Given a problem prompt $x$, the policy $\pi_{\theta}$ generates a token sequence $y_{1:T}$. We define the prefix state as $s_t = (x, y_{1:t})$,
where $t=0$ denotes the prompt-only state. 
After the full trajectory is generated, an answer checker returns a binary outcome
$Y \in \{0,1\}$,
where $Y=1$ indicates that the generated solution is correct.

In the undiscounted episodic setting, the canonical value of a prefix is its probability of eventual success under the current policy $\pi$:
$q_\pi(s_t) = P_\pi(Y=1 \mid s_t)$.
This quantity is the Pass@1 success probability of the current partial reasoning trace. The goal is to estimate useful token-level advantages from only a \emph{single sampled rollout} per prompt.

Standard PPO is commonly paired with generalized advantage estimation, which computes advantages from temporal-difference errors and bootstraps from a learned value function \citep{schulman2015gae, schulman2017ppo}. In long mathematical reasoning traces, however, the only directly observed supervision is often a delayed binary outcome. Temporal-difference bootstrapping can therefore propagate value-estimation errors along the horizon. We instead train a prefix-level credit model using Monte Carlo outcome labels and use its predictions to construct dense token-level credit.

\subsection{Outcome-Calibrated Prefix Credit Critic}

We train a credit critic model to predict the final binary success label from each prefix of the trajectory. Let
$\hat{q}_{\pi,t}$
denote the model's predicted probability that prefix $s_t$ will eventually lead to a correct solution. Each prefix in the trajectory is supervised by the same Monte Carlo outcome label $Y$.

For a single prefix, the outcome-calibration loss is
The trajectory-level credit-model loss is
\begin{equation}
\mathcal{L}_{\text{credit}}
=\frac{1}{T+1}\sum_{t=0}^{T} \ell_t(\phi)+\lambda_{\text{prompt}} \ell_0(\phi).
\end{equation}
where the per-token loss is defined as $\ell_t(\phi)=\operatorname{BCE}(\hat{q}_{\pi,t}, Y)+\lambda_{\text{Brier}} (\hat{q}_{\pi,t} - Y)^2 $.
The first term trains the model on all prefixes, while the additional prompt-level term anchors the value estimate at the initial prompt state. The binary cross-entropy term encourages discrimination between successful and unsuccessful rollouts. The Brier-score term improves probability calibration, so that $p_t$ remains interpretable as a calibrated success probability rather than merely a ranking score \citep{brier1950verification}.

\subsection{From Pass@1 to Pass@k Values}

The prefix value $q_\pi(s_t)$ corresponds to the expected success probability of a single continuation, equivalent to Pass@1. To capture multi-sample success, we define the Pass@k value as the probability that at least one of $k$ independent continuations succeeds: 
\[q_{\pi,k}(s_t) = P_\pi(\max_{i \le k}Y_i|s_t) =1 - (1 - q_\pi(s_t))^k,\] 
where $Y_1,\cdots,Y_k$ are $k$ conditionally independent continuation outcomes given the prefix and sampled from the same underlying policy.
This transformation quantity from Pass@1 precisely evaluates the probability that at least one completion succeeds. 

During training, we directly approximate $q_{\pi,k}(s_t)$ with an extra credit network, denoting its output as $\hat{q}_{\pi,k}(s_t)$.
Since the subsequent training objectives are formulated based on Pass@1 values, we map the Pass@\(k\) prediction back to an induced Pass@1
prediction via the inverse transformation:
\[
\widehat q_\pi(s_t)
=
1-\bigl(1-\widehat q_{\pi,k}(s_t)\bigr)^{1/k}.
\]
We therefore integrate the induced estimate \(\widehat q_\pi(s_t)\) into the identical loss functions formulated previously, while parameterizing it through the Pass@\(k\) network \(\widehat q_{\pi,k}\).

The Pass@k transformation reshapes the geometry of the value signal. Its derivative with respect to the underlying Pass@1 success probability is given by: \[\frac{\partial q_{\pi,k}}{\partial q_\pi}=k(1-q_\pi)^{k-1}.\]
For $k>1$, this derivative vanishes as $q_\pi$ approaches one. Therefore, prefixes that are already highly likely to succeed receive less gradient scale than under Pass@1. At the same time, low-probability yet reachable prefixes remain sensitive to optimization. This is the precise sense in which Pass@$k$ provides a more selective and less disruptive learning signal: it attenuates gradients from already-solvable states while preserving active learning signals on hard but reachable states.

\subsection{Token-level Credit Assignment}

The prefix credit model provides a value estimate at every token position. For $k=1$, the local change in predicted success probability is $\Delta \hat{q}_{\pi}(s_t) = \hat{q}_\pi(s_t) - \hat{q}_\pi(s_{t-1})$.
This quantity measures whether the token transition from $s_{t-1}$ to $s_t$ increases or decreases the estimated probability of eventual success.

The token-level advantage is set to be
$A_{\pi}(s_t) = (\hat{q}_\pi(s_t) - \hat{q}_\pi(s_{t-1}) + \lambda (Y - \hat{q}_\pi(s_T))$,
where $(Y - \hat{q}_\pi(s_T))$ is a terminal correction term. The local difference term assigns credit to tokens that improve the predicted chance of success, while the terminal correction keeps the dense token-level signal aligned with the observed final outcome. In this report we set $\lambda=1$.

The same construction applies to Pass@k values. We first compute the local change
$\Delta \hat{q}_{\pi,k}(s_t) = \hat{q}_{\pi,k}(s_t) - \hat{q}_{\pi,k}(s_{t-1})$.
The Pass@k advantage is then $A_{\pi,k}(s_t) = \Delta \hat{q}_{\pi,k}(s_t) + \lambda (Y - \hat{q}_{\pi,k}(s_T))$.
When $k=1$, this reduces to the Pass@1 advantage above. For $k>1$, positive credit is assigned to tokens that increase the probability that at least one continuation from the prefix can succeed.

\subsection{Single-Rollout Policy-Gradient Objective}

At each update, we sample a single rollout for a batch of \(B\) prompts from the current behavior policy \(\pi_{\theta_{\text{old}}}\). 
For trajectory \(i\), let \(y_{i,t}\) denote the token generated at prefix state \(s_{i,t-1}\). 
A prefix credit model produces a detached token-level advantage estimate \( A_{i,t}^{(k)}\), 
where the superscript \(k\) indicates that the credit output is approximating the Pass@\(k\) value.

We optimize the on-policy objective 
\[
\mathcal J(\theta)
=
\frac{1}{M}
\sum_{i=1}^{B}
\sum_{t=1}^{T_i}
r_{i,t}(\theta) A_{i,t}^{(k)},
\]
where
\[
M=\sum_{i=1}^{B}T_i
\]
is the total number of generated tokens in the batch, and
\[
r_{i,t}(\theta)
=
\frac{
\pi_\theta(y_{i,t}\mid s_{i,t-1})
}{
\pi_{\theta_{\text{old}}}(y_{i,t}\mid s_{i,t-1})
}
=
\exp\left(
\log \pi_\theta(y_{i,t}\mid s_{i,t-1})
-
\log \pi_{\theta_{\text{old}}}(y_{i,t}\mid s_{i,t-1})
\right).
\]

In standard PPO, the surrogate would use the clipped objective.
However, in our main setting, each training batch is freshly sampled from \(\pi_{\theta_{\text{old}}}\), since we use a single PPO gradient computation by the setting of equal \textit{ppo mini batch size} and \textit{training batch size}.
Thus, before the optimizer step, the policy parameters satisfy
\[
\theta=\theta_{\text{old}},
\]
and therefore
\[
r_{i,t}(\theta_{\text{old}})=1
\]
for every sampled token, leading to inactive clipping branch. 
Under this training schedule, the update is equivalent to an on-policy single-step policy-gradient update with dense token-level credit. 

The key distinction from group-based methods is that the rollout budget remains one sample per prompt. 
The method does not estimate advantages by normalizing rewards across multiple completions of the same problem.
Instead, it uses a learned prefix-value model to convert a single Monte Carlo terminal outcome into dense token-level credit.

\subsection{Connection to Direct Reachability}

The Pass@\(k\) target interpolates between success probability and
reachability. The quantity
\[
q_{\pi,k}(s)=1-\bigl(1-q_\pi(s)\bigr)^k
\]
measures the probability that at least one of \(k\) independent continuations
from prefix state \(s\) succeeds. However, as \(k\to\infty\), this target no
longer measures the magnitude of the success probability under the current
policy. Instead, it converges to the binary reachability target
\[
r_\pi(s)=\mathds 1\{q_\pi(s)>0\},
\]
which only records whether success is possible from the current prefix with
nonzero probability. This convergence is formalized in
Lemma~\ref{lem:passk-reachability-unpruned}. Lemma \ref{lem:passk-function-class-approx} also shows that this limiting relationship also transfers to
function approximation. If reachability is easy to approximate by a function
class \(\mathcal F\), then Pass@\(k\) is also easy to approximate for sufficiently
large \(k\).

The reachability limit is especially natural when the reasoning process is
viewed as a graph \((V,E)\) over prefixes.
If the successful terminal states are known and the support graph is available,
then the set of all states that can reach success can be computed exactly by a
reverse graph traversal from the successful terminals. This computation takes
linear time in the size of the graph,
\[
O(|V|+|E|),
\]
using standard breadth-first or depth-first search on the reverse graph
\citep{cormen2022algorithms}, shown in algorithm \ref{alg:finite-horizon-reachability}.

Success probability is a quantitative value: it depends on the total
probability mass of all successful continuations. Reachability is a qualitative
support-level value: it depends only on whether at least one successful
continuation exists. Consequently, if the
support graph over states is known, reachability can be computed. 
By contrast, estimating \(q_\pi(s)\) generally requires knowing or
sampling numerical transition probabilities, and can still hold a 
large estimation error with a large amount of sampling data, as shown 
in Corollary \ref{cor:reachability-easy-probability-hard}. 

In practice, however, the full reasoning graph is not explicitly available, and
exact reachability is often too coarse for policy optimization: it assigns the
same value to all prefixes from which success is possible, even if their success
probabilities differ substantially. We therefore use Pass@4 as a softer
surrogate. It remains closer to the multi-continuation success probability than
binary reachability, while still smoothing the dense Pass@1 signal and
providing graded token-level credit from a single sampled rollout through the
learned prefix credit model.

\section{Experiment Results}

\subsection{Experimental Setup}

We use Qwen3-1.7B \citep{yang2025qwen3} in thinking mode as the base policy model and set the maximum
response length to \(9,182\) tokens. Training is performed on DeepScaleR
mathematical reasoning dataset \citep{deepscaler2025}. The goal of this experimental study is not to make a broad empirical claim about Pass@\(k\) SR-PPO, but rather to identify a stable single-rollout credit-assignment method for improving mathematical reasoning
with a small open-source base model.

We evaluate on competition-level mathematical reasoning benchmarks, including AIME24, AIME25 and HMMT26 \citep{dekoninck2026beyond}. For evaluation, we sample \(128\) independent completions per problem and report Pass@\(k\) metrics. For \(k<128\), Pass@\(k\) is estimated from the \(128\) samples using
the standard estimator
\[
\widehat{\text{Pass@}k}
=
1-
\frac{\binom{n-c}{k}}{\binom{n}{k}},
\qquad
n=128,
\]
where \(c\) is the number of correct completions among the \(n\) samples.

Unless otherwise specified, SR-PPO uses one rollout per prompt. We set the PPO
learning rate to \(10^{-6}\), the KL coefficient to \(10^{-3}\), the entropy
coefficient to zero, the PPO clip ratio to \(0.2\), and the sampling temperature
to \(1.2\). The prefix credit network uses Qwen3-1.7B with a binary
classification head and is full finetuned as well. Its learning rate is \(10^{-5}\), and all credit-network
related coefficients, including the prompt coefficient and final-term calibration in advantage,
are set to \(1\). For GRPO, we tune the learning rate to be \(2\times 10^{-6}\). 
Also, for the training batch size, our SR-PPO only uses $256$ prompts, 
while GRPO use $128$ prompts but with \(8\) rollouts per prompt giving baselines advantages in computation consumed per step. 

Across experiments, we compare PPO with generalized advantage estimation and two other SR-PPO variants. 
The central question is whether a single-rollout method, equipped with learned token-level
credit, can improve multi-sample reasoning performance steadily without requiring
multiple completions per prompt during training.

\subsection{Main Validation Improvement}

\begin{figure}
    \centering
    \includegraphics[width=\linewidth]{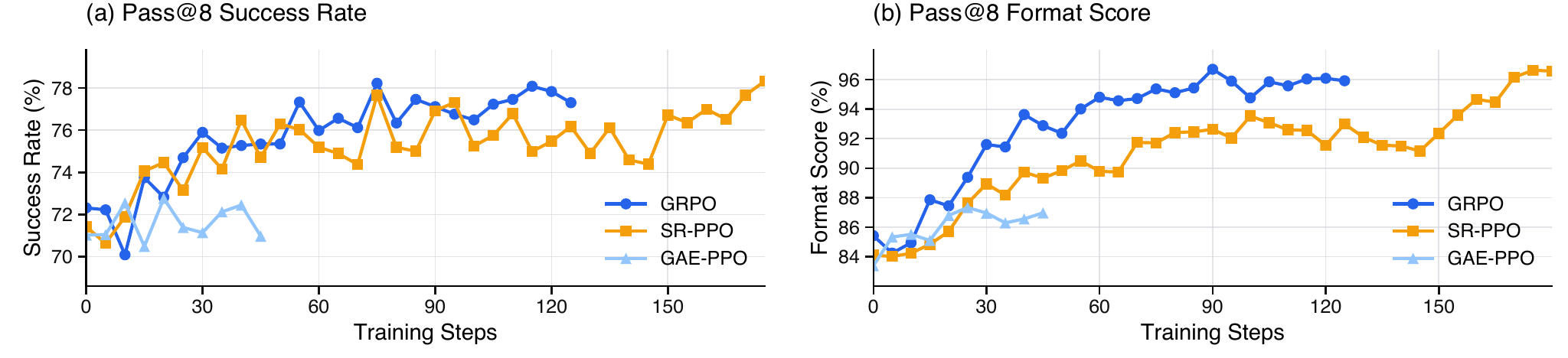}
    \caption{
    Training-step comparison of GRPO, SR-PPO, and GAE-PPO. Panel~(a) reports Pass@8 validation success rate, and Panel~(b) reports Pass@8 validation format score. The x-axis is the training step, \textit{i.e.}, the number of gradient-update steps, not the number of generated rollouts.
    }
    \label{fig:main-pass8-validation}
\end{figure}

Figure~\ref{fig:methodology} Panel (b) shows the primary validation learning curves. The main comparison is between GRPO and Pass@4 SR-PPO. Although GRPO has
access to \(8\) rollouts per prompt during training, Pass@4 SR-PPO achieves competitive Pass@8 validation performance using only a single sampled rollout.
This suggests that the learned prefix credit model can extract useful token-level learning signal from sparse terminal outcomes.

The GAE-PPO comparison in Figure~\ref{fig:main-pass8-validation} provides a useful negative baseline. Under the tested hyperparameters, standard temporal-difference bootstrapping with GAE is not learning with a single rollout; even the format is barely improving as shown in the right subfigure. This supports the need for a value estimation method better aligned with long-horizon mathematical reasoning.

\subsection{Held-out Evaluation Performance}

\begin{figure}
    \centering
    \includegraphics[width=\linewidth]{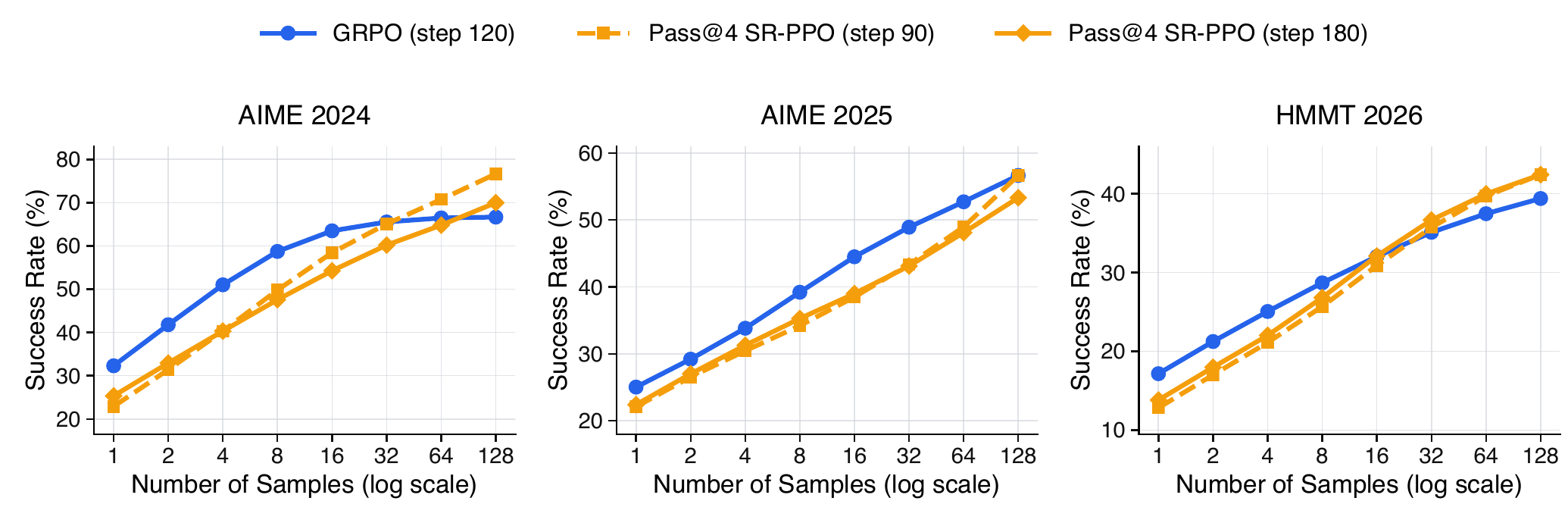}
    \caption{
    Held-out success rate as a function of the inference-time sampling budget on AIME 2024, AIME 2025, and HMMT 2026. The x-axis is the number of samples used at evaluation time, corresponding to pass@\(k\) for \(k\in\{1,2,4,8,16,32,64,128\}\)
    The curves compare GRPO and Pass@1 SR-PPO on competition-level mathematical
    reasoning benchmarks.
    }
    \label{fig:testset}
\end{figure}

Figure~\ref{fig:testset} reports held-out Pass@\(k\) performance on AIME24, AIME25, and HMMT26. Across these benchmarks, performance improves linearly as the number of sampled completions increases. This trend indicates that single-rollout training does not merely improve greedy or Pass@1 behavior; it can also improve the quality of the distribution of sampled solutions. In other words, the learned prefix-value estimator appears to provide useful credit even though the training procedure does not use multiple completions per prompt.

\subsection{Ablation Studies on SR-PPO variants}

Figure~\ref{fig:methodology} Panel (c) compares Pass@1 validation success across
algorithmic variants. As expected, Pass@4 SR-PPO is not necessarily the best
method for optimizing Pass@1 directly, since its credit target emphasizes
multi-continuation success rather than single-sample success. However, the
Pass@1 SR-PPO variants are unstable: their Pass@1 validation curves degrade
during training, especially when using four steps of gradient accumulation with off-policy gradient computation.

\begin{table}[ht]
\centering
\begin{tabular}{lrrr}
\toprule
\textbf{Model} & \textbf{Total Tokens} & \textbf{Tokens w/ Marginal Advantages} & \textbf{Ratio \%} \\
\midrule
\textsc{Pass@4 SR-PPO (Step 90)}  & 54,074 & 26,503 & 49.01\% \\
\textsc{Pass@4 SR-PPO (Step 180)} & 58,554 & 34,368 & 58.69\% \\
\textsc{Pass@1 SR-PPO (Step 90})  & 63,075 & 387    & 0.61\% \\
\bottomrule
\end{tabular}
\caption{
Fraction of tokens whose token-level advantage magnitude is smaller than
\(10^{-2}\). These statistics measure sparsity in the learned credit signal;
they are not PPO ratio-clipping statistics.
}
\label{tab:token-advantage-sparsity}
\end{table}

This behavior is consistent with the motivation for using Pass@\(k\)-based
credit. Table~\ref{tab:token-advantage-sparsity} compares the sparsity of the
token-level advantages induced by Pass@1 and Pass@4 credit targets. We count a
token as having small credit when its advantage magnitude is below \(10^{-2}\).
The Pass@4 SR-PPO runs contain a much larger fraction of such near-zero
advantage tokens: \(49.01\%\) at step \(90\) and \(58.69\%\) at step \(180\),
compared with only \(0.61\%\) for Pass@1 SR-PPO at step \(90\).

This suggests that the Pass@4 credit model produces a more selective local
credit signal. Many tokens receive nearly zero update because the predicted
Pass@4 value changes little across the corresponding transition.
This selectivity helps explain the improved stability of Pass@4 SR-PPO with
less noisy token-level credit from sparse success labels. 

We also provide qualitative case studies in Appendix~\ref{sec:case_study} by
visualizing token-level advantages on a SR-PPO rollout. Each token
is colored according to the sign and magnitude of its advantage: green indicates
positive advantage, red indicates negative advantage, and darker colors indicate
larger absolute values. Recall that the token-level advantage contains both a local value-difference
term and a terminal calibration term. This terminal calibration term dominates 
the credit assignment. For example, in failed generations where \(Y=0\), this term is usually
negative and shifts most token advantages below zero. Local prefix-value
differences can still produce scattered positive tokens.

The visualization highlights a qualitative difference between pass@1 and
pass@4 credit assignment. The pass@1 variant exhibits more abrupt token-level
changes in color intensity, suggesting noisier local credit assignment from a
single terminal Monte Carlo outcome.

\section{Conclusion}

We presented an empirical study of single-rollout PPO with prefix-level Monte
Carlo credit assignment for mathematical reasoning. The central idea is to train
a calibrated prefix credit model that predicts final correctness from every intermediate prefix. This converts a sparse terminal Monte Carlo outcome into a training signal that can be applied throughout the reasoning trajectory. We identifies Pass@\(k\)-based prefix credit assignment as a promising algorithmic choice for stable single-rollout policy optimization for stable and prolonged reinforcement learning training. 

\section*{Limitations}
This report has obvious limitations. The experiments are restricted to DeepScaleR mathematical reasoning data, a Qwen3-1.7B base model, a single running seed and a limited set of hyperparameter choices. 
We do not yet establish the generality of Pass@\(k\) SR-PPO across domains, or long-horizon agentic environments. The connection to reachability also remains primarily a guiding principle rather than a complete description of practical reasoning dynamics, since the full prefix graph is not explicitly available and the learned credit model is only an approximation.

\bibliography{iclr2026_conference}
\bibliographystyle{iclr2026_conference}

\appendix
\section{Proof of Connection to the Reachability}

\begin{lemma}[Pass@\(k\) converges to reachability]
\label{lem:passk-reachability-unpruned}
Fix a time \(t\) and let \(H=T-t\). Let \(P\) be the original transition
kernel, without pruning or clipping transition probabilities. Define the
Pass@1 success probability
\[
p_t(s)
=
\Pr(Y_T=1\mid S_t=s),
\]
the reachability target
\[
r_t(s)
=
\mathds 1\{p_t(s)>0\},
\]
and the Pass@\(k\) success probability
\[
p_{t,k}(s)
=
1-\bigl(1-p_t(s)\bigr)^k .
\]
Then, for every state \(s\),
\[
p_{t,k}(s)\longrightarrow r_t(s)
\qquad
\text{as } k\to\infty .
\]

Moreover, define
\[
\alpha_t
=
\inf\{p_t(s): r_t(s)=1\}.
\]
If \(\alpha_t>0\), then the convergence is uniform and satisfies
\[
\|p_{t,k}-r_t\|_\infty
\le
(1-\alpha_t)^k .
\]
\end{lemma}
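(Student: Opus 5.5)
The proof is a pointwise case analysis on the value of \(r_t(s)\), followed by a uniform bound obtained from the monotonicity of \(x\mapsto(1-x)^k\). Throughout we use only that \(p_t(s)=\Pr(Y_T=1\mid S_t=s)\in[0,1]\), which holds for the original kernel \(P\) without any pruning. No structural property of the transition graph is needed.

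\textbf{Pointwise convergence.} Fix a state \(s\) and split on \(r_t(s)\).
\begin{itemize}
\item If \(r_t(s)=0\), then \(p_t(s)=0\), so \(p_{t,k}(s)=1-1^k=0=r_t(s)\) for every \(k\). Convergence is trivial.
\item If \(r_t(s)=1\), then \(p_t(s)\in(0,1]\), so \(0\le 1-p_t(s)<1\). Hence \(|p_{t,k}(s)-r_t(s)|=(1-p_t(s))^k\to0\) as a geometric sequence. The case \(p_t(s)=1\) gives error exactly \(0\) for \(k\ge1\), using the convention \(0^k=0\).
\end{itemize}

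\textbf{Uniform bound.} Suppose \(\alpha_t>0\). Note that \(\alpha_t\le1\) automatically when some reachable state exists; if none exists, the supremum in the bound is over an empty set of nonzero errors and the claim is vacuous. For states with \(r_t(s)=0\) the error is \(0\) as above. For states with \(r_t(s)=1\) we have \(p_t(s)\ge\alpha_t\) by definition of the infimum. Since \(x\mapsto(1-x)^k\) is nonincreasing on \([0,1]\),
\[
|p_{t,k}(s)-r_t(s)|=(1-p_t(s))^k\le(1-\alpha_t)^k .
\]
Taking the supremum over \(s\) gives \(\|p_{t,k}-r_t\|_\infty\le(1-\alpha_t)^k\). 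Because \(1-\alpha_t<1\), this bound tends to \(0\), which establishes uniformity.

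There is no real obstacle here. The only points needing care are the degenerate cases \(p_t(s)=1\) and \(\alpha_t=1\), handled by the convention \(0^k=0\) for \(k\ge1\), and the remark that \(\alpha_t>0\) holds automatically when the state space is finite. Without \(\alpha_t>0\), one can exhibit states with \(p_t(s)\downarrow0\) for which convergence is not uniform, which explains why the hypothesis is needed.
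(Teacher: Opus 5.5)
Your proposal is correct and follows the paper's proof essentially step for step. Both split on $r_t(s)$, use the geometric decay $(1-p_t(s))^k\to 0$ on reachable states, and obtain the uniform bound from $p_t(s)\ge\alpha_t$. Your remarks on the degenerate cases ($p_t(s)=1$, no reachable states, finite state spaces) are harmless additions that the paper omits.
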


\begin{proof}
If \(r_t(s)=0\), then \(p_t(s)=0\). Therefore
\[
p_{t,k}(s)
=
1-(1-0)^k
=
0
=
r_t(s).
\]
Hence
\[
|p_{t,k}(s)-r_t(s)|=0.
\]

If \(r_t(s)=1\), then \(p_t(s)>0\). In this case,
\[
p_{t,k}(s)
=
1-\bigl(1-p_t(s)\bigr)^k,
\]
and therefore
\[
r_t(s)-p_{t,k}(s)
=
1-
\left[
1-\bigl(1-p_t(s)\bigr)^k
\right]
=
\bigl(1-p_t(s)\bigr)^k .
\]
Since \(0<p_t(s)\le 1\), we have
\[
\bigl(1-p_t(s)\bigr)^k\longrightarrow 0
\qquad
\text{as } k\to\infty .
\]
Thus
\[
p_{t,k}(s)\to r_t(s)
\]
for every \(s\).

If additionally \(\alpha_t>0\), then for every reachable state \(s\),
\[
p_t(s)\ge \alpha_t.
\]
Therefore,
\[
|p_{t,k}(s)-r_t(s)|
=
\bigl(1-p_t(s)\bigr)^k
\le
(1-\alpha_t)^k.
\]
For unreachable states the error is \(0\), so the same bound holds for all
states. Taking the supremum over \(s\) gives
\[
\|p_{t,k}-r_t\|_\infty
\le
(1-\alpha_t)^k .
\]
\end{proof}

\begin{lemma}[small reachability approximation error implies small Pass@\(k\) error]
\label{lem:passk-function-class-approx}
Let \(\mathcal F\subseteq L^2(\mu)\) be any function class. For any target
function \(f:\mathcal S\to[0,1]\), define its best approximation error under
\(\mathcal F\) and state distribution \(\mu\) by
\[
A_\mu^{\mathcal F}(f)
=
\inf_{h\in\mathcal F}
\mathbb E_{s\sim\mu}
\left[
\left(f(s)-h(s)\right)^2
\right].
\]
Let \(p_t(s)\) be the Pass@1 success probability, let
\[
r_t(s)=\mathbf 1\{p_t(s)>0\}
\]
be the reachability target, and let
\[
p_{t,k}(s)
=
1-\bigl(1-p_t(s)\bigr)^k
\]
be the Pass@\(k\) target. Define
\[
e_{t,k}
=
\|p_{t,k}-r_t\|_{L^2(\mu)}.
\]
Then
\[
A_\mu^{\mathcal F}(p_{t,k})
\le
\left(
\sqrt{A_\mu^{\mathcal F}(r_t)}
+
e_{t,k}
\right)^2 .
\]
In particular, if
\[
\alpha_t
=
\inf\{p_t(s):r_t(s)=1\}
>0,
\]
then
\[
A_\mu^{\mathcal F}(p_{t,k})
\le
\left(
\sqrt{A_\mu^{\mathcal F}(r_t)}
+
(1-\alpha_t)^k
\right)^2 .
\]
\end{lemma}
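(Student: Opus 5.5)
The approach is the triangle inequality in \(L^2(\mu)\), applied for each fixed \(h\in\mathcal F\), followed by an infimum over \(h\). Fix an arbitrary \(h\in\mathcal F\). Since \(\mathcal F\subseteq L^2(\mu)\), and \(p_{t,k}\) and \(r_t\) take values in \([0,1]\), all functions involved lie in \(L^2(\mu)\). Minkowski's inequality then gives
\[
\|p_{t,k}-h\|_{L^2(\mu)}
\le
\|r_t-h\|_{L^2(\mu)}+\|p_{t,k}-r_t\|_{L^2(\mu)}
=
\|r_t-h\|_{L^2(\mu)}+e_{t,k}.
\]
Both sides are nonnegative, so squaring preserves the inequality. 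Using the definition of \(A_\mu^{\mathcal F}\), we get
\[
A_\mu^{\mathcal F}(p_{t,k})\le \bigl(\|r_t-h\|_{L^2(\mu)}+e_{t,k}\bigr)^2 .
\]

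Next, take the infimum over \(h\in\mathcal F\) on the right-hand side. The map \(x\mapsto (x+e_{t,k})^2\) is continuous and nondecreasing on \([0,\infty)\). Hence the infimum of the right-hand side equals \(\bigl(\inf_h\|r_t-h\|_{L^2(\mu)}+e_{t,k}\bigr)^2\). Moreover,
\[
\inf_h\|r_t-h\|_{L^2(\mu)}=\sqrt{\inf_h\|r_t-h\|^2_{L^2(\mu)}}=\sqrt{A_\mu^{\mathcal F}(r_t)},
\]
because the square root is monotone and continuous. This yields the first claim. The only point needing care is exchanging the infimum with these monotone transformations. I will handle it directly, by taking a sequence \(h_n\) with \(\|r_t-h_n\|^2\to A_\mu^{\mathcal F}(r_t)\) and passing to the limit in the per-\(h\) bound. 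This avoids assuming that the infimum is attained.

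For the second claim, I invoke Lemma~\ref{lem:passk-reachability-unpruned}. Under \(\alpha_t>0\), it gives \(|p_{t,k}(s)-r_t(s)|\le(1-\alpha_t)^k\) for every state \(s\). Since \(\mu\) is a probability measure, this yields
\[
e_{t,k}=\Bigl(\mathbb E_{s\sim\mu}\bigl[(p_{t,k}(s)-r_t(s))^2\bigr]\Bigr)^{1/2}\le\|p_{t,k}-r_t\|_\infty\le(1-\alpha_t)^k.
\]
Substituting this into the first bound, again using monotonicity of \(x\mapsto(\sqrt{A}+x)^2\), completes the argument.

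I expect no serious obstacle. The most delicate step is the interchange of the infimum with squaring and square roots, which the limiting-sequence argument settles. A minor remaining point is that the sup-norm bound should be read \(\mu\)-almost everywhere, which it trivially is because it holds pointwise.
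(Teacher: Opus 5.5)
Your proposal is correct and matches the paper's proof. You apply Minkowski's inequality $\|p_{t,k}-h\|_{L^2(\mu)}\le\|r_t-h\|_{L^2(\mu)}+e_{t,k}$ for each $h\in\mathcal F$, take an infimum over $h$ (the paper does this before squaring and you after, which is equivalent by monotonicity and continuity), and use the pointwise bound $|p_{t,k}(s)-r_t(s)|\le(1-\alpha_t)^k$ to get $e_{t,k}\le(1-\alpha_t)^k$; the only cosmetic difference is that you cite Lemma~\ref{lem:passk-reachability-unpruned} for that pointwise bound, while the paper re-derives it inline.
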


\begin{proof}
By definition,
\[
\sqrt{A_\mu^{\mathcal F}(p_{t,k})}
=
\inf_{h\in\mathcal F}
\|p_{t,k}-h\|_{L^2(\mu)}.
\]
For any \(h\in\mathcal F\), the triangle inequality gives
\[
\|p_{t,k}-h\|_{L^2(\mu)}
\le
\|p_{t,k}-r_t\|_{L^2(\mu)}
+
\|r_t-h\|_{L^2(\mu)}.
\]
Taking the infimum over \(h\in\mathcal F\) yields
\[
\sqrt{A_\mu^{\mathcal F}(p_{t,k})}
\le
\|p_{t,k}-r_t\|_{L^2(\mu)}
+
\inf_{h\in\mathcal F}
\|r_t-h\|_{L^2(\mu)}.
\]
By the definitions of \(e_{t,k}\) and \(A_\mu^{\mathcal F}(r_t)\), this becomes
\[
\sqrt{A_\mu^{\mathcal F}(p_{t,k})}
\le
e_{t,k}
+
\sqrt{A_\mu^{\mathcal F}(r_t)}.
\]
Squaring both sides gives
\[
A_\mu^{\mathcal F}(p_{t,k})
\le
\left(
\sqrt{A_\mu^{\mathcal F}(r_t)}
+
e_{t,k}
\right)^2 .
\]

If \(\alpha_t>0\), then for every state with \(r_t(s)=1\),
\[
p_t(s)\ge \alpha_t.
\]
Therefore,
\[
|p_{t,k}(s)-r_t(s)|
\le
(1-\alpha_t)^k
\]
for all \(s\), and hence
\[
e_{t,k}
=
\|p_{t,k}-r_t\|_{L^2(\mu)}
\le
(1-\alpha_t)^k.
\]
Substituting this bound into the previous inequality gives
\[
A_\mu^{\mathcal F}(p_{t,k})
\le
\left(
\sqrt{A_\mu^{\mathcal F}(r_t)}
+
(1-\alpha_t)^k
\right)^2 .
\]
\end{proof}

\section{Comparison between Reachability and Success Probability Estimation}
This corollary highlights a structural advantage of reachability targets.
Reachability depends only on the support of the transition graph: whether a
successful path exists. Once the support graph is known, reachability can be
computed exactly by graph search or dynamic programming. By contrast, success
probability depends on the numerical transition probabilities along all
successful paths. Even when reachability is known exactly, estimating these
probabilities from rollouts can require many samples, especially when the
success probability is small.

\begin{algorithm}[t]
\caption{Finite-horizon backward reachability dynamic program}
\label{alg:finite-horizon-reachability}
\begin{algorithmic}[1]
\Require Horizon \(T\); state layers \(\{\mathcal S_t\}_{t=0}^T\);
terminal success set \(\mathcal G\subseteq \mathcal S_T\); successor lists
\[
\text{Succ}_t(s)
=
\{s'\in \mathcal S_{t+1}: P_t(s'\mid s)>0\}.
\]
\Ensure Reachability labels
\[
r_t(s)=\mathds 1\{p_t(s)>0\},
\qquad
p_t(s)=\Pr(Y_T=1\mid S_t=s).
\]

\For{\(s\in \mathcal S_T\)}
    \State \(r_T(s)\gets \mathbf 1\{s\in\mathcal G\}\)
\EndFor

\For{\(t=T-1,T-2,\dots,0\)}
    \For{\(s\in\mathcal S_t\)}
        \State \(r_t(s)\gets 0\)
        \For{\(s'\in \text{Succ}_t(s)\)}
            \If{\(r_{t+1}(s')=1\)}
                \State \(r_t(s)\gets 1\)
                \State \textbf{break}
            \EndIf
        \EndFor
    \EndFor
\EndFor

\State \Return \(\{r_t(s):0\le t\le T,\ s\in\mathcal S_t\}\)
\end{algorithmic}
\end{algorithm}

\begin{corollary}[Exact reachability but hard success-probability estimation]
\label{cor:reachability-easy-probability-hard}
Consider a finite-horizon Markov process with state sets
\(\{\mathcal S_t\}_{t=0}^T\), goal set \(\mathcal G\subseteq \mathcal S_T\),
and known support graph
\[
E_t
=
\{(s,s')\in \mathcal S_t\times \mathcal S_{t+1}: P_t(s,s')>0\}.
\]
Let
\[
N=\sum_{t=0}^T |\mathcal S_t|,
\qquad
M=\sum_{t=0}^{T-1}|E_t|.
\]
Then all reachability labels
\[
r_t(s)=\mathds 1\{p_t(s)>0\}
\]
can be computed exactly in time
\[
O(N+M).
\]
Thus reachability computation has zero error once the support graph is known.

In contrast, suppose the numerical success probability
\[
p_t(s)=\Pr(Y_T=1\mid S_t=s)
\]
must be estimated from independent rollouts. Then, for every
\(\alpha\in(0,1/4]\) and every \(n\le 1/(2\alpha)\), there exists a one-step
Markov process with the same known support graph, needing one data sample to know the reachability,
but with success probability
\[
p(s)\in\{\alpha,2\alpha\},
\]
such that any estimator \(\widehat p(s)\) based on \(n\) rollouts satisfies
\[
\sup_{p(s)\in\{\alpha,2\alpha\}}
\Pr\left(
|\widehat p(s)-p(s)|\ge \frac{\alpha}{2}
\right)
\ge
\frac14 .
\]
Consequently, exact reachability can be obtained with linear computation in the
size of the support graph, while estimating success probabilities can fail
unless the number of rollouts scales at least as
\[
n=\Omega(1/\alpha),
\]
even in the simplest one-step case.
\end{corollary}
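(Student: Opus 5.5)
The statement has two parts, and I would prove them separately: an algorithmic upper bound for reachability and an information-theoretic lower bound for probability estimation.

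\textbf{Reachability in $O(N+M)$.} I would argue that Algorithm~\ref{alg:finite-horizon-reachability} is correct by backward induction on $t$. At $t=T$ we have $p_T(s)=\mathds 1\{s\in\mathcal G\}$. For $t<T$ the Markov property gives $p_t(s)=\sum_{s'}P_t(s,s')\,p_{t+1}(s')$. Every term is nonnegative, so $p_t(s)>0$ if and only if some $s'\in\text{Succ}_t(s)$ has $p_{t+1}(s')>0$. This is exactly the disjunction the inner loop computes, so by induction $r_t$ is exact at every layer. For the running time, each state is initialized once, costing $O(N)$ in total. The inner loop scans each edge of $E_t$ at most once, costing $O(M)$ in total. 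The result is exact, with zero error, because only the support enters the computation.

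\textbf{Lower bound construction.} I would use a two-point (Le Cam) argument. The process has a single start state $s$ with two successors: a goal state $g\in\mathcal G$ and a failure state $b$. Set $P(s,g)=p$ and $P(s,b)=1-p$, with $p\in\{\alpha,2\alpha\}$. Since $0<\alpha\le 1/4$, both edges have positive probability under either hypothesis. Hence the support graph is identical across the two hypotheses and is known, and one pass of the dynamic program gives $r(s)=1$. The $n$ rollouts are i.i.d.\ $\mathrm{Bernoulli}(p)$ outcomes; call their joint law $Q_\alpha^{n}$ or $Q_{2\alpha}^{n}$.

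\textbf{Le Cam step.} For any estimator $\widehat p$, the events $\{|\widehat p-\alpha|<\alpha/2\}$ and $\{|\widehat p-2\alpha|<\alpha/2\}$ are disjoint, since $\alpha$ and $2\alpha$ are $\alpha$ apart. Therefore
\[
Q_\alpha^n\bigl(|\widehat p-\alpha|\ge \tfrac{\alpha}{2}\bigr)+Q_{2\alpha}^n\bigl(|\widehat p-2\alpha|\ge \tfrac{\alpha}{2}\bigr)\ge 1-\mathrm{TV}(Q_\alpha^n,Q_{2\alpha}^n).
\]
The larger of the two error probabilities is then at least $(1-\mathrm{TV})/2$. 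It remains to show $\mathrm{TV}\le 1/2$. I would bound the KL divergence by the $\chi^2$ divergence:
\[
\mathrm{KL}\bigl(\mathrm{Ber}(\alpha)\,\|\,\mathrm{Ber}(2\alpha)\bigr)\le \frac{\alpha^2}{2\alpha(1-2\alpha)}=\frac{\alpha}{2(1-2\alpha)}\le \alpha,
\]
where the last inequality uses $\alpha\le 1/4$. KL tensorizes over the $n$ i.i.d.\ rollouts, so $n\,\mathrm{KL}\le n\alpha\le 1/2$ whenever $n\le 1/(2\alpha)$. Pinsker's inequality then gives $\mathrm{TV}\le\sqrt{n\,\mathrm{KL}/2}\le 1/2$. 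Combining, the worst-case error probability is at least $1/4$. The consequence $n=\Omega(1/\alpha)$ follows by contraposition: any estimator that succeeds with probability above $3/4$ on both hypotheses must use $n>1/(2\alpha)$ rollouts.

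The main thing to get right is the constant bookkeeping in the KL and Pinsker step, that is, checking that $\alpha\le1/4$ and $n\le 1/(2\alpha)$ really give $\mathrm{TV}\le 1/2$. The reachability part is routine.
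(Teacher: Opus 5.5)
Your proposal is correct and follows essentially the same route as the paper. For the first claim it uses the backward reachability dynamic program with an $O(N+M)$ node/edge count. For the second it uses the one-step two-point construction with $p\in\{\alpha,2\alpha\}$, combined with KL tensorization, Pinsker, and Le Cam, to get worst-case error at least $1/4$.

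The differences are minor refinements. You justify $\mathrm{KL}(\mathrm{Ber}(\alpha)\|\mathrm{Ber}(2\alpha))\le\alpha$ through the $\chi^2$ bound, which the paper only asserts. You make the DP's correctness explicit via $p_t(s)=\sum_{s'}P_t(s,s')p_{t+1}(s')$. And you apply Le Cam directly to the estimator through disjoint events, where the paper reduces to a threshold test.
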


\begin{proof}
The first claim follows from the backward reachability dynamic program stated in Algorithm \ref{alg:finite-horizon-reachability}. At
terminal time,
\[
r_T(s)=\mathds 1\{s\in\mathcal G\}.
\]
For \(t<T\),
\[
r_t(s)
=
\mathds 1
\left\{
\exists s'\in\mathcal S_{t+1}
\text{ such that }
(s,s')\in E_t
\text{ and }
r_{t+1}(s')=1
\right\}.
\]
The algorithm visits each time-state node once and inspects each support edge at
most once, so the runtime is
\[
O(N+M).
\]
Because the computation is deterministic given the support graph, the
reachability labels are computed with zero error.

For the lower bound, consider a one-step process with an initial state \(s\),
a success terminal state \(g\), and a failure terminal state \(b\). Let
\[
P(s,g)=p,
\qquad
P(s,b)=1-p,
\]
where
\[
p\in\{\alpha,2\alpha\}.
\]
For both choices of \(p\), the support graph is the same:
\[
s\to g,
\qquad
s\to b.
\]
Therefore the reachability label is also the same:
\[
r(s)=1.
\]
Thus the support-based reachability algorithm computes \(r(s)\) exactly.

However, estimating \(p\) from rollouts is equivalent to estimating the mean of
a Bernoulli random variable. Let \(\mathbb P_\alpha\) denote the distribution
of \(n\) i.i.d. samples from \(\text{Bernoulli}(\alpha)\), and let
\(\mathbb P_{2\alpha}\) denote the distribution of \(n\) i.i.d. samples from
\(\text{Bernoulli}(2\alpha)\). For \(\alpha\le 1/4\),
\[
D_{\text{KL}}
\left(
\text{Bernoulli}(\alpha)
\;\|\;
\text{Bernoulli}(2\alpha)
\right)
\le
\alpha.
\]
Hence
\[
D_{\text{KL}}
\left(
\mathbb P_\alpha
\;\|\;
\mathbb P_{2\alpha}
\right)
\le
n\alpha
\le
\frac12.
\]
By Pinsker's inequality,
\[
\text{TV}(\mathbb P_\alpha,\mathbb P_{2\alpha})
\le
\sqrt{
\frac12
D_{\text{KL}}
\left(
\mathbb P_\alpha
\;\|\;
\mathbb P_{2\alpha}
\right)
}
\le
\frac12.
\]
Therefore, by Le Cam's two-point method, any test distinguishing
\(p=\alpha\) from \(p=2\alpha\) has error probability at least
\[
\frac12
\left(
1-\text{TV}(\mathbb P_\alpha,\mathbb P_{2\alpha})
\right)
\ge
\frac14.
\]

Now suppose an estimator \(\widehat p(s)\) achieved
\[
|\widehat p(s)-p(s)|<\frac{\alpha}{2}
\]
with probability greater than \(3/4\) under both \(p=\alpha\) and \(p=2\alpha\).
Then the threshold test
\[
\widehat p(s)
\le
\frac{3\alpha}{2}
\quad\text{versus}\quad
\widehat p(s)>
\frac{3\alpha}{2}
\]
would distinguish \(p=\alpha\) from \(p=2\alpha\) with error probability less
than \(1/4\), contradicting the lower bound above. Hence
\[
\sup_{p(s)\in\{\alpha,2\alpha\}}
\Pr\left(
|\widehat p(s)-p(s)|\ge \frac{\alpha}{2}
\right)
\ge
\frac14 .
\]
This proves the sample-complexity separation.
\end{proof}

\section{Qualitative Token Credit Analysis}
\label{sec:case_study}

\begin{figure}[H]
    \centering
    \includegraphics[width=0.95\linewidth]{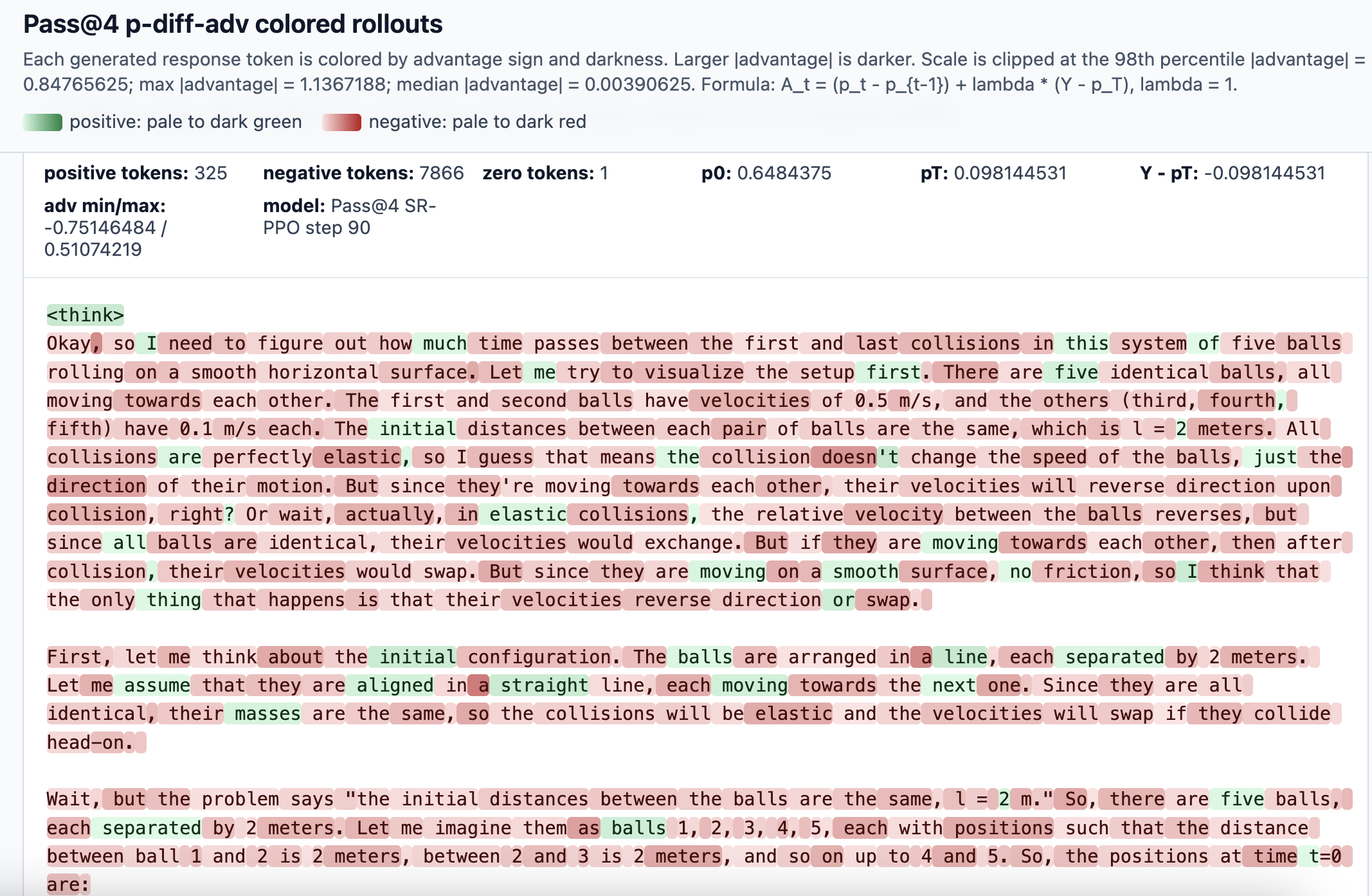}
    \caption{Token credit assignment visualization for Pass@4 SR-PPO.}
    \label{fig:placeholder_pass_at_4}
\end{figure}

\begin{figure}[H]
    \centering
    \includegraphics[width=0.95\linewidth]{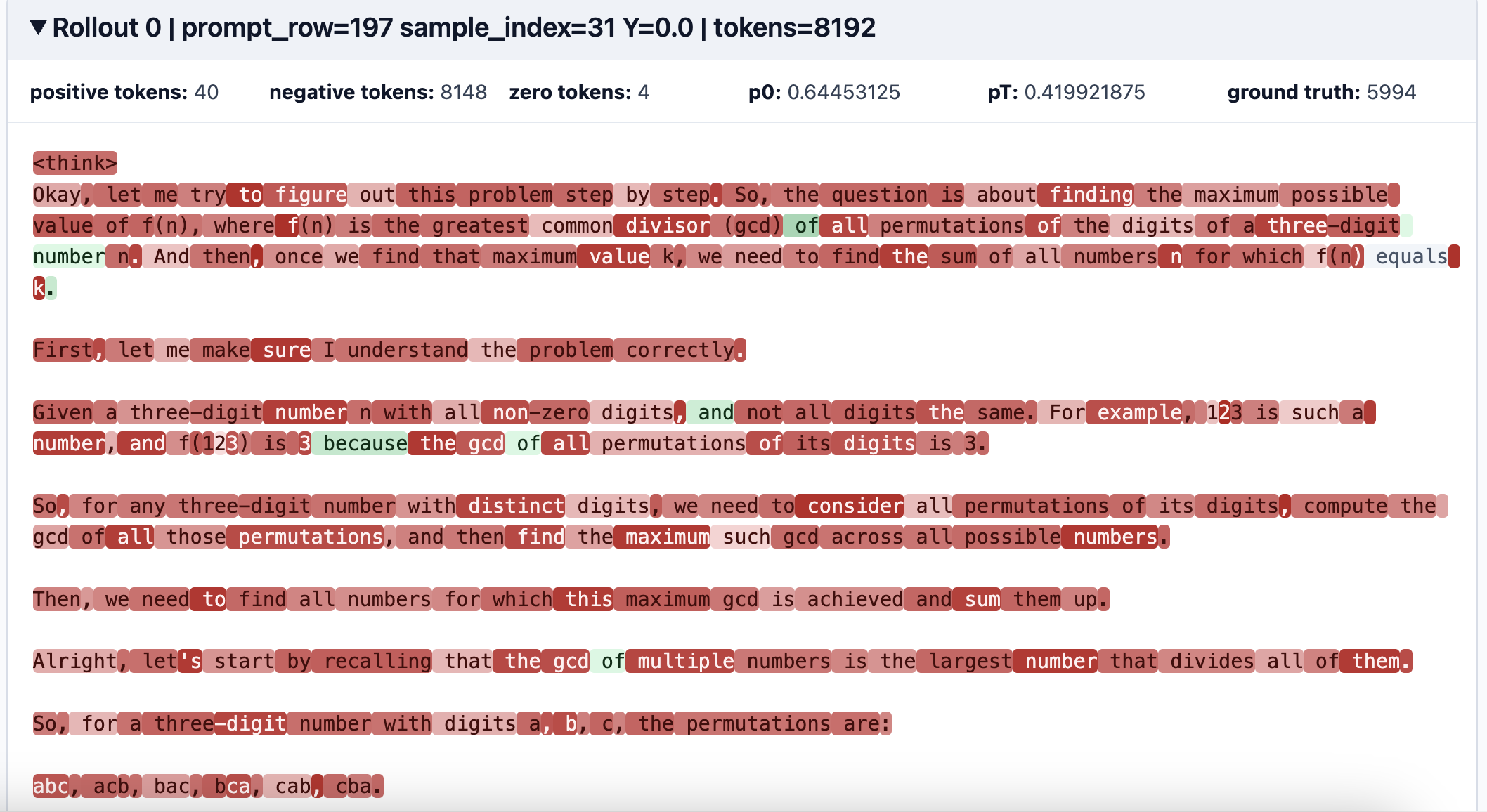}
    \caption{Token credit assignment visualization for Pass@1 SR-PPO.}
    \label{fig:placeholder}
\end{figure}

\end{document}